\icmltitlerunning{Discriminative Features via Generalized Eigenvectors}
\newtheorem{prop}{Proposition}
\renewcommand{\P}{\mathbb{P}}
\newcommand{\R}{\mathbb{R}}
\newcommand{\E}{\mathbb{E}}
\newcommand{\I}{\mathbb{I}}
\newcommand{\D}{\mathcal{D}}
\DeclareMathOperator*{\Cov}{Cov}
\DeclareMathOperator*{\Tr}{Trace}
\begin{document} 

\twocolumn[
\icmltitle{Discriminative Features via Generalized Eigenvectors}

% It is OKAY to include author information, even for blind
% submissions: the style file will automatically remove it for you
% unless you've provided the [accepted] option to the icml2013
% package.
\icmlauthor{Nikos Karampatziakis}{nikosk@microsoft.com}
%\icmladdress{Microsoft CISL, 1 Microsoft Way, Redmond, WA 98052 USA}
\icmlauthor{Paul Mineiro}{pmineiro@microsoft.com}
\icmladdress{Microsoft CISL, 1 Microsoft Way, Redmond, WA 98052 USA}

% You may provide any keywords that you 
% find helpful for describing your paper; these are used to populate 
% the "keywords" metadata in the PDF but will not be shown in the document
\icmlkeywords{feature learning, supervised learning, multiclass classification}

\vskip 0.3in
]

\begin{abstract} 
    Representing examples in a way that is compatible with the 
    underlying classifier can greatly enhance the performance of 
    a learning system. In this paper we investigate scalable 
    techniques for inducing discriminative features by taking 
    advantage of simple second order structure in the data. We
    focus on multiclass classification and show that features
    extracted from the generalized eigenvectors of the class
    conditional second moments lead to classifiers with excellent 
    empirical performance. Moreover, these features have attractive
    theoretical properties, such as inducing representations that are
    invariant to linear transformations of the input. We evaluate
    classifiers built from these features on three different tasks,
    obtaining state of the art results.
\end{abstract}

\section{Introduction}
\label{intro}
 
Supervised learning has been a great success story for machine
learning, both in theory and in practice.  In theory, we have
a good understanding of the conditions under which supervised
learning can succeed \cite{vapnik1998statistical}. In practice,
supervised learning approaches are profitably employed in many
domains, from movie recommendation to speech and image recognition
\cite{koren2009matrix,hinton2012deep,krizhevsky2012imagenet}. The
success of all of these systems crucially hinges on the compatibility
between the model and the representation used to solve the problem.

For some problems, the kinds of representations and models that
lead to good performance are well-known.  In text classification,
for example, unigram and bigram features together with linear
classifiers are known to work well for a variety of related
tasks \cite{halevy2009unreasonable}.  For other problems,
such as drug design, speech, and image recognition, far less
is known about which combinations are effective.  This has
fueled interest in methods that can learn the appropriate
representations directly from the raw signal, with techniques
such as dictionary learning~\cite{mairal2008supervised} and deep
learning~\cite{krizhevsky2012imagenet,hinton2012deep} achieving
state of the art performance in many important problems.

In this work, we explore conceptually and computationally simple
ways to create discriminative features that can scale to a large
number of examples, even when data is distributed across many
machines. Our techniques are not a panacea. They are exploiting
simple second order structure in the data and it is very easy to
come up with sufficient conditions under which they will not give
any advantage over learning using the raw signal.  Nevertheless,
they empirically work remarkably well.

Our setup is the usual multiclass setting where we are given labeled
data $\{x_i,y_i\}_{i=1}^n$, sampled iid from a distribution $\D$
on $\R^d \times [k]$, and we need to come up with a classifier $h:
\R^d \to [k]$ with low generalization error $\P_\D(h(x)\neq y)$.
Abusing notation, we will sometimes use $y$ to refer to the one
hot encoding of $y$ that identifies each class with one of the
vertices of the standard $k-1$-simplex.  To keep the focus on the
quality of our feature representation we will restrict ourselves to
$h$ being linear, such as a multiclass linear SVM or multinomial
logistic regression.  We suspect representations that improve the
performance of linear classifiers will also beneficially compose
with nonlinear techniques.

\section{Method}

One of the simplest possible statistics involving both features and labels is the matrix $\E[x y^\top]$,
which in multiclass classification is the collection of class-conditional mean feature vectors.
This statistic has been thoroughly explored, e.g., Fisher LDA~\cite{fisher1936use} 
and Sliced Inverse Regression~\cite{li1991sliced}.  However, in many practical applications we
expect that the data distribution contains much more information
than that contained in the first moment statistics.  The natural next object of study
is the tensor $\E[x \otimes x \otimes y]$.

In multiclass classification, the tensor $\E[x\otimes x \otimes
y]$ is simply a collection of the conditional second moment
matrices $C_i=\E[xx^\top | y=i]$.  There are many standard ways of
extracting features from these matrices. For example, one could try
per-class PCA \cite{wold1977simca} which will find directions that
maximize $v^\top\E[xx^\top | y=i]v=\E[(v^\top x)^2 | y=i]$, or VCA
\cite{livni2013vanishing} which will find directions that minimize
the same quantity. The subtlety here is that there is no reason to
believe that these directions are specific to class $i$. In other
words, the directions we find might be very similar for
all classes and, therefore, not be discriminative.

A simple alternative is to work with the quotient 
\begin{equation}
\label{eqn:rayleigh} 
R_{ij}(v)=\frac{\E[(v^\top x)^2 | y=i]}{\E[(v^\top x)^2 | y=j]}=
\frac{v^\top C_i v}{v^\top C_j v},
\end{equation} 
whose local maximizers are the generalized eigenvectors solving
$C_i v = \lambda C_j v$. Despite the non-convexity, efficient and
robust routines for solving these types of problems are part of
mature software packages such as LAPACK.

Since objective~\eqref{eqn:rayleigh} is homogeneous in $v$, we will
assume that each eigenvector $v$ is scaled such that $v^\top C_j
v=1$. Then we have that $v^\top C_i v=\lambda$, i.e.~on average,
the squared projection of an example from class $i$ on $v$ will be
$\lambda$ while the squared projection of an example from class $j$
will be $1$. As long as $\lambda$ is far from 1, this gives us a
direction along which we expect to be able to discriminate the two
classes by simply using the magnitude of the projection. Moreover,
if there are many eigenvalues substantially different from 1 all
associated eigenvectors can be used as feature detectors.

\subsection{Useful Properties}

The feature detectors resulting from maximizing equation \eqref{eqn:rayleigh}
have two useful properties which we list below.  For simplicity
we state the results assuming full rank exact conditional moment
matrices, and then discuss the impact of regularization and finite
samples.

\begin{prop} \label{prop:invariance}
(Invariance) Under the above assumptions,
the embedding $v^\top x$ is invariant to invertible linear
transformations of $x$.
\end{prop}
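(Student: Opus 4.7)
The plan is to show that applying any invertible linear transformation $x \mapsto Ax$ to the data induces a corresponding transformation on the generalized eigenvectors that exactly cancels $A$ when we form the projection $v^\top x$.

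First, I would compute how the class-conditional second moment matrices change under $x \mapsto Ax$. Directly from the definition, $\tilde{C}_i = \E[Ax\,x^\top A^\top \mid y=i] = A C_i A^\top$, and similarly $\tilde{C}_j = A C_j A^\top$. Then I would substitute these into the generalized eigenvalue problem $\tilde{C}_i \tilde{v} = \tilde{\lambda}\, \tilde{C}_j \tilde{v}$ and try the ansatz $\tilde{v} = A^{-\top} v$. Plugging in yields $A C_i A^\top A^{-\top} v = \tilde\lambda A C_j A^\top A^{-\top} v$, i.e.\ $A C_i v = \tilde\lambda A C_j v$; left-multiplying by $A^{-1}$ (which exists by assumption) recovers the original generalized eigenproblem $C_i v = \lambda C_j v$ with the same eigenvalue. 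Hence the transformed eigenpairs are exactly $(\lambda, A^{-\top} v)$.

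Next, I would verify that the normalization convention $v^\top C_j v = 1$ is preserved: $\tilde v^\top \tilde C_j \tilde v = v^\top A^{-1} A C_j A^\top A^{-\top} v = v^\top C_j v = 1$, so the scale used in the paper is intrinsic. Finally, the embedding computed in the transformed coordinates is $\tilde v^\top (Ax) = (A^{-\top} v)^\top A x = v^\top A^{-1} A x = v^\top x$, which is exactly the embedding in the original coordinates, proving the claim.

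There is no real obstacle here; the only subtle point is well-definedness of the eigenvector, since generalized eigenvectors are only determined up to scale and up to rotations within a generalized eigenspace. The normalization step handles the scale ambiguity, and the claim should be understood to hold in the generic case where the relevant eigenvalues are simple (the full-rank assumption stated in the proposition ensures $C_j$ is invertible so the problem reduces to an ordinary eigenproblem for $C_j^{-1} C_i$, which inherits the same transformation law). For degenerate eigenvalues, the invariance holds at the level of the eigenspace spanned by the selected feature detectors rather than each individual vector.
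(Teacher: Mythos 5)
Your proof is correct and rests on the same key observation as the paper's: the generalized eigenvector transforms contravariantly as $v \mapsto A^{-\top}v$, which cancels the $A$ in the projection. The only difference is presentational --- the paper routes the argument through the Cholesky factorization $C_j = L_jL_j^\top$ to reduce to a symmetric eigenproblem (scaffolding it reuses for the Diversity proposition), whereas you verify the ansatz directly on the generalized eigenproblem and additionally check that the normalization $v^\top C_j v = 1$ is preserved, which is a nice touch.
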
  
\begin{proof}
Let  $A \in \R^{d \times d}$ be invertible and $x'=Ax$ be the
transformed input.  Let $C_m=\E[xx^\top|y=m]$ be the second moment
matrix given $y=m$ for the original data. For a class pair $(i,j)$,
a generalized eigenvector $v$ satisfies $C_i v= \lambda C_j v$.
Using the Cholesky factorization $C_j=L_j L_j^\top$ and setting
$v=L_j^{-\top}u$ we have
\begin{equation} \label{eq:gevreduction}
L_j^{-1}C_i L_j^{-\top} u=\lambda u,
\end{equation} i.e.,\ $u$ is an eigenvector of $L_j^{-1}C_i L_j^{-\top}$. 
Moreover, the embedding for the original data involves only 
\begin{equation} \label{eq:embed}
v^\top x=u^\top L_j^{-1} x.
\end{equation} 
For the transformed data, the conditional second moments are
$\E[x'x'^\top|y=m]=A\E[xx^\top|y=m]A^\top=AC_m A^\top$ and the
corresponding generalized eigenvector $v'$ satisfies 
$AC_i A^\top v'= \lambda AC_j A^\top v'$.  Letting 
$v'=A^{-\top} L_j^{-\top} u'$ we see that $u'$ satisfies
$L_j^{-1}C_i L_j^{-\top} u'=\lambda u'$ which is the same as
\eqref{eq:gevreduction}.  Therefore $u'$ can be chosen such that
$u'=u$. Finally, the embedding involves only 
$v'^\top x' = u'^\top L_j^{-1}A^{-1} A x = u^\top L_j^{-1} x$ which
is the same as the embedding \eqref{eq:embed} for the original data.
\end{proof}

It is worth pointing out that the results of some popular methods,
such as PCA, are not invariant to linear transformations of
the inputs. For such methods, differences in preprocessing and
normalization can lead to vastly different results.  The practical
utility of an ``off the shelf'' classifier is greatly improved by
this invariance, which provides robustness to data specification,
e.g., differing units of measurement across the original features.

\begin{prop}
(Diversity) Two feature detectors $v_1$ and $v_2$ extracted from the 
same ordered class pair $(i,j)$ have uncorrelated responses
$\E[(v_1^\top x)(v_2^\top x)|y=j]=0.$
\end{prop}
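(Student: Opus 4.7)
The plan is to leverage the reduction used in the proof of Proposition~\ref{prop:invariance}: writing $C_j = L_j L_j^\top$ and $v = L_j^{-\top} u$, the generalized eigenvalue problem $C_i v = \lambda C_j v$ is converted into the ordinary symmetric eigenvalue problem $M u = \lambda u$ with $M = L_j^{-1} C_i L_j^{-\top}$. Since $M$ is symmetric, its eigenvectors can be chosen to form an orthonormal basis, so if $v_1, v_2$ are two feature detectors for the pair $(i,j)$, the associated $u_1, u_2$ satisfy $u_1^\top u_2 = 0$ (either because they correspond to distinct eigenvalues, or, in the case of a repeated eigenvalue, because one may select an orthonormal basis of the eigenspace).

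The second step is a one-line computation: write
\begin{equation*}
\E[(v_1^\top x)(v_2^\top x) \mid y=j] = v_1^\top C_j v_2 = v_1^\top L_j L_j^\top v_2 = (L_j^\top v_1)^\top (L_j^\top v_2) = u_1^\top u_2 = 0,
\end{equation*}
where the last equality follows from the orthogonality established in the first step. This gives the desired conclusion.

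There is no real obstacle here; the only subtle point worth flagging is the degenerate case of repeated generalized eigenvalues, where the conclusion holds only after a suitable orthonormalization (e.g., Gram--Schmidt in the $u$-coordinates, which pulls back to an appropriate choice of $v$'s under the map $v = L_j^{-\top} u$). In practice this is exactly what LAPACK's symmetric-definite routines return, so the statement matches the output of standard solvers. One might also note, as a remark, that the analogous orthogonality need not hold in the conditional distribution of class $i$ unless one also rescales, since in the $u$-coordinates class $i$ has second moment $M$ rather than the identity.
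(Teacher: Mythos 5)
Your proof is correct and follows essentially the same route as the paper: reduce to the symmetric eigenproblem for $L_j^{-1}C_iL_j^{-\top}$ via $v=L_j^{-\top}u$, invoke orthogonality of the $u$'s, and compute $\E[(v_1^\top x)(v_2^\top x)\mid y=j]=v_1^\top C_j v_2=u_1^\top u_2=0$. Your added caveat about orthonormalizing within repeated eigenspaces is a reasonable refinement the paper leaves implicit.
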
  
\begin{proof}
This follows from the orthogonality of the eigenvectors in the
induced problem $L_j^{-1}C_i L_j^{-\top} u=\lambda u$ (c.f. proof
of Proposition~\ref{prop:invariance}) and the connection  
$v= L_j^{-\top}u$.  If $u_1$ and $u_2$ are eigenvectors of
$L_j^{-1}C_i L_j^{-\top}$ then 
$ 0 = u_1^\top u_2 = v_1^\top L_j L_j^\top v_2 = v_1^\top \E[xx^\top|y=j] v_2 = \E[(v_1^\top x)(v_2^\top x)|y=j].  $
\end{proof}

Diversity indicates the different generalized eigenvectors per class pair 
provide complementary information, and that techniques which only use
the first generalized eigenvector are not maximally exploiting the data.

\subsection{Finite Sample Considerations}

Even though we have shown the properties of our method assuming
knowledge of the expectations $\E[xx^\top|y=m]$, in practice we
estimate these quantities from our training samples. The empirical
average
\begin{equation}\label{eq:empirical}
\hat C_m = \frac{\sum_{i=1}^n \I[y_i=m] x_i x_i^\top}{\sum_{i=1}^n \I[y_i=m]}
\end{equation}
converges to the expectation at a rate of $O(n^{-1/2})$.
Here and below we are suppressing the dependence upon 
the dimensionality $d$, which we consider fixed. Typical
finite sample tail bounds become meaningful once
$n=O(d\log d)$~\cite{vershynin2010introduction}.

Given $\hat C_m= C_m +E_m$ with $||E_m||_2=O(n^{-1/2})$, we can
use results from matrix perturbation theory to establish that our
finite sample results cannot be too far from those obtained using
the expected values.  For example, if the \emph{Crawford number}
\[
c(C_i,C_j) \doteq \min_{||v||=1} (v^\top C_i v)^2 + (v^\top C_j v)^2 > 0,
\]
and the perturbations $E_i$ and $E_j$ satisfy
\[
||E_i||_2^2+||E_j||_2^2 < c(C_i,C_j),
\]
then \cite{golub2012matrix} for all $q \in [d]$
\[
\tan(|\tan^{-1}(\lambda_q) - \tan^{-1}(\hat \lambda_q)|) \leq
 O\left(\frac{1}{\sqrt n c(C_i,C_j)}\right),
\]
where $\lambda_q,\hat \lambda_q$ are the $q$-th generalized
eigenvalues of the matrix pairs $C_i,C_j$ and $\hat C_i,\hat C_j$
respectively. Similar results apply to the sine of the angle 
between an estimated generalized eigenvector and the true one
\cite{demmel2000templates} Section~5.7. 
%As expected, when generalized
%eigenvalues are clustered we can only accurately estimate
%the space the corresponding generalized eigenvectors span.

\subsection{Regularization}

% http://www.svcl.ucsd.edu/courses/ece271B-F09/handouts/Dimensionality3.pdf slides 14+
An additional concern with finite samples is that $\hat C_m$ may
not be full rank as we have assumed until now. In particular, if
there are fewer than $d$ examples in class $m$, then $\hat C_m$
is guaranteed to be rank deficient.  When such a matrix appears
in the denominator of \eqref{eqn:rayleigh}, estimation of the
eigenvectors can be unstable and overly sensitive to the sample at
hand. A common solution \cite{platt2010translingual} is to regularize
the denominator matrix by adding a multiple of the identity to the
denominator, i.e., maximizing
\begin{equation}
R^\gamma_{ij}(v)=\frac{v^\top \hat  C_i v}{v^\top ( \hat C_j +\gamma I)v},
\label{eqn:regsignoise}
\end{equation}
which is equivalent to maximizing equation \eqref{eqn:rayleigh}
with an additional upper-bound constraint on the norm of $v$.
We typically set $\gamma$ to be a small multiple of the average
eigenvalue of $\hat C_j$ \cite{friedman1989regularized} which
can be easily obtained as the trace of $\hat C_j$ divided by $d$.
In Section~\ref{sec:experiments} we find this strategy empirically
effective.

\subsection{An Algorithm}

\begin{algorithm}[t]
\begin{algorithmic}[1]
\REQUIRE $S=\{(x_i,y_i)\}_{i=1}^n$, $\theta \geq 0$ and $\gamma \geq 0$
\STATE $F \leftarrow \emptyset$
\FOR{$(i,j \neq i) \in \{ 1, \ldots, k \}^2$}
\STATE Solve $\hat C_i V = (\hat C_j+\frac \gamma d \Tr(\hat C_j) I) V \Lambda $
\STATE $F \leftarrow F \cup \{ V_{q} | \Lambda_{qq} \geq \theta \}$
\ENDFOR
\STATE $\psi_{v,\alpha,\delta}(x) \doteq \max(0,\delta v^\top x)^{\alpha/2}$\\
\STATE $\phi(x) \doteq [\psi_{v,\alpha,\delta}(x) | v,\alpha,\delta \in F \times \{1,2,3\} \times \{-1,1\}]$ \\
\STATE $w = \mathrm{MultiLogit} (\{ (\phi(x), y) | (x, y) \in S \})$
\end{algorithmic}
\caption{Generalized Eigenvectors for Multiclass}
\label{alg:gem}
\end{algorithm}

We are left with specifying a full algorithm for multiclass
classification.  First we need to specify how to use the eigenvectors
$\{ v_i \}$. The eigenvectors define an embedding for each
example $x$ using the projection magnitudes $\{ v_i^\top x \}$
as new coordinates. However the embedding is linear, therefore
composition with a linear classifier is equivalent to learning a
linear classifier in the original space, perhaps with a different
regularization.  This motivates the use of nonlinear functions of
the projection magnitude.

To construct nonlinear maps, we can get inspiration from the
optimization criterion in equation \eqref{eqn:rayleigh}, i.e., the
ratio of expected projection magnitudes conditional on different
class labels.  For example, we could use a nonlinear map such as
$(v^\top x)^2$. This type of nonlinearity can be sensitive (for
example, it is not Lipschitz) so in practice more robust proxies
can be used such as $|v^\top x|$ or even $|v^\top x|^{1/2}$.\footnote{
These choices are simple and yield only slightly worse results 
than what we report in our experiments.} In
principle, smoothing splines or any other flexible set of univariate
basis functions could be used.  In our experiments we simply fit a
piecewise cubic polynomial on $|v^\top x|^{1/2}$. The polynomial has
only two pieces, one for $v^\top x>0$ and one for $v^\top x \leq 0$.
We briefly experimented with interaction terms between projection
magnitudes, but did not find them beneficial.
 
Additionally, we need to address from which class pairs to extract
eigenvectors.  A simple and empirically effective approach, suitable
when the number of classes is modest, is to just use all ordered
pairs of classes. This can be wasteful if two classes are never
confused.  The alternative, however, of leaving out a pair $(i,j)$
is that the classifier might have no way of distinguishing between
these two classes.  Since we do not know upfront which pairs of
classes will be confused, our brute force approach is just a safe
way to endow the classifier with enough flexibility to deal with
any pair of classes that could potentially be confused.  Of course,
as the number of classes grows, this brute force approach becomes
less viable both computationally (due to the quadratic increase
in generalized eigenvalue problems) and statistically (due to
the increase in the number of features for the final classifier).
We discuss issues regarding large numbers of classes in 
Section~\ref{sec:discussion}.

Finally, the generalized eigenvalues can guide us in picking a subset
of the $d$ generalized eigenvectors we could extract from each class
pair, i.e., generalized eigenvalues are useful for feature selection.
A generalized eigenvector $v$ with eigenvalue $\lambda$ has 
$\E[(v^\top x)^2 | y]$ equal to $1$ for the denominator class $y=j$ and
equal to $\lambda$ for the numerator class $y=i$. Therefore, eigenvalues far
from 1 correspond to highly discriminative features. Similar to
\cite{platt2010translingual}, we extract the top few eigenvectors, 
as top eigenspaces are cheaper to compute than bottom eigenspaces.
To guard against picking non-discriminative eigenvectors, we 
discard those whose eigenvalues are less than a threshold $\theta>1$.

The above observations lead to the GEM procedure outlined in
Algorithm~\ref{alg:gem}.  Although Algorithm~\ref{alg:gem} has proven sufficiently versatile
for the experiments described herein, it is merely an example of
how to use generalized eigenvalue based features for multiclass
classification.  Other classification techniques could benefit from
using the raw projection values without any nonlinear manipulation,
e.g., decision trees; additionally the generalized eigenvectors
could be used to initialize a neural network architecture as a form
of pre-training.

We remark that each step in Algorithm~\ref{alg:gem} is highly
amenable to distributed implementation: empirical class-conditional
second moment matrices can be computed using map-reduce techniques,
the generalized eigenvalue problems can be solved independently in
parallel, and the logistic regression optimization is convex and
therefore highly scalable~\cite{agarwal2011}.

\section{Related Work}\label{sec:related}

\begin{table}
\begin{center}
\begin{tabular}{|c|c|c|}
\hline
Method & Signal & Noise \\
\hline\hline
PCA          & $\E[xx^\top]$                & $I$\\
VCA          & $I$                          & $\E[xx^\top]$\\
Fisher LDA   & $\E_y[\E[x|y]\E[x|y]^\top]$  & $\sum_y \Cov[x|y]$\\
SIR          & $\sum_y \E[w|y]\E[w|y]^\top$ & $I$\\
Oriented PCA & $\E[xx^\top]$                & $\E[zz^\top]$\\
Our method &  $\E[xx^\top|y=i]$             & $\E[xx^\top|y=j]$\\
\hline
\end{tabular}
\caption{Table of related methods
(assuming $\E[x]=0$) for finding directions that maximize the
signal to noise ratio. $\Cov[x|y]$ refers to the conditional
covariance matrix of $x$ given $y$, $w$ is a whitened version
of $x$, and $z$ is any type of noise meaningful to the task at
hand. }
\label{tab:signalnoise}
\end{center}
\vspace{-20pt}
\end{table}

Our approach resembles many existing methods that work by finding
eigenvectors of matrices constructed from data. One can think
of all these approaches as procedures for finding directions $v$
that maximize the signal to noise ratio
\begin{equation}
R(v)=\frac{v^\top S v}{v^\top N v},
\label{eqn:signoise}
\end{equation}
where the symmetric matrices $S$ and $N$ are such that the
quadratic forms $v^\top S v$ and $v^\top N v$ represent the signal
and the noise, respectively, captured along direction $v$. In
Table~\ref{tab:signalnoise} we present many well known approaches
that could be cast in this framework.  Principal Component Analysis
(PCA) finds the directions of maximal variance without any particular
noise model.  The recently proposed Vanishing Component Analysis
(VCA) \cite{livni2013vanishing} finds the directions on which the
projections vanish so it can be thought as swapping the roles of
signal and noise in PCA. Fisher LDA maximizes the variability
in the class means while minimizing the within class variance.
Sliced Inverse Regression first whitens $x$, and
then uses the second moment matrix of the conditional whitened means
as the signal and, like PCA, has no particular noise model.  Finally,
oriented PCA \cite{diamantaras1996principal,platt2010translingual}
is a very general framework in which the noise matrix can be the
correlation matrix of any type of noise $z$ meaningful to the task
at hand.

By closely examining the signal and noise matrices, it is clear that
each method can be further distinguished according to two other
capabilities: whether it is possible to extract many directions,
and whether the directions are discriminative. For example, PCA and
VCA can extract many directions but these are not discriminative.
In contrast,  Fisher LDA and SIR are discriminative but they work
with rank-$k$ matrices so the number of directions that could be
extracted is limited by the number of classes. Furthermore both of
these methods lose valuable fidelity about the data by using the
conditional means.

Oriented PCA is sufficiently general to encompass our technique as a
special case. Nonetheless, to the best of our knowledge, the specific
signal and noise models in this paper are novel and, as we show in
Section~\ref{sec:experiments}, they empirically work very well.

\section{Experiments}\label{sec:experiments}

\subsection{MNIST}

\begin{figure}
\includegraphics[trim=30 60 15 15 clip,width=\linewidth]{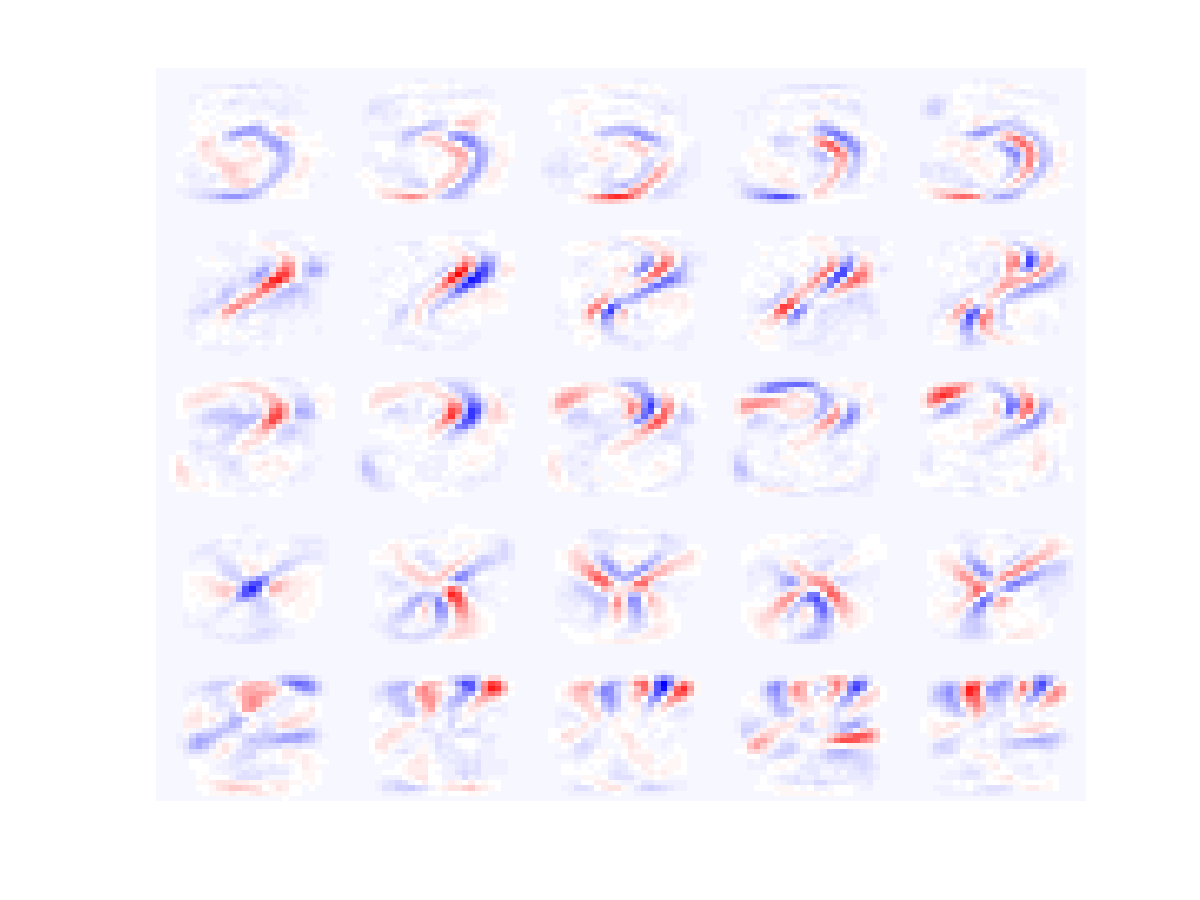} 
\caption{Pictures of the top 5 generalized eigenvectors for MNIST for class pairs $(3, 2)$
    (top row), $(8, 5)$ (second row), $(3, 5)$ (third row), $(8, 0)$ (fourth
    row), and $(4, 9)$ (bottom row) with $\gamma = 0.5$.  Filters have large response on the first 
class and small response on the second class. Best viewed in color.}
\label{fig:mnistfeats}
\vspace{-12pt}
\end{figure}

We begin with the MNIST database of handwritten
digits~\cite{mnistlecun}, for which we can visualize the generalized
eigenvectors, providing intuition regarding the discriminative
nature of the computed directions.  For each of the ten classes,
we estimated $C_m = \E[xx^\top|y=m]$ using \eqref{eq:empirical}
and then extracted generalized eigenvectors for each class
pair $(i, j)$ by solving $\hat C_i v = \lambda (\frac{\gamma}{d} \Tr(\hat C_j) I + \hat C_j) v$.
Figure~\ref{fig:mnistfeats} shows a sample of results from this procedure
for five class pairs (one in each row) and $\gamma=0.5$. In the top row
we use class pair $(3, 2)$ and we observe that the eigenvectors 
are sensitive to the circular stroke of a typical 3 while remaining 
insensitive to the areas where 2s and 3s overlap. Similar results 
are seen in the second and third rows where we use class pairs $(8, 5)$
and $(3, 5)$: the strokes we find are along areas used by the first class
and mostly avoided by the second class. In the fourth row we use class 
pair $(8, 0)$. Here we observe two patterns. First, a dot in the center that 
avoids the 0s. The other 4 detectors consist of positive (red) and negative (blue) strokes
arranged in a way that would cancel each other if we take the inner product 
of the detector with a radially symmetric pattern such as a 0.  Similarly in the bottom
row with class pair $(4, 9)$, the detector attempts to cancel the horizontal stroke corresponding
to the top of the 9, where a typical 4 would be open.

\begin{figure}
\begin{center}
\includegraphics[trim=140 270 135 260, clip, width=0.9 \linewidth]{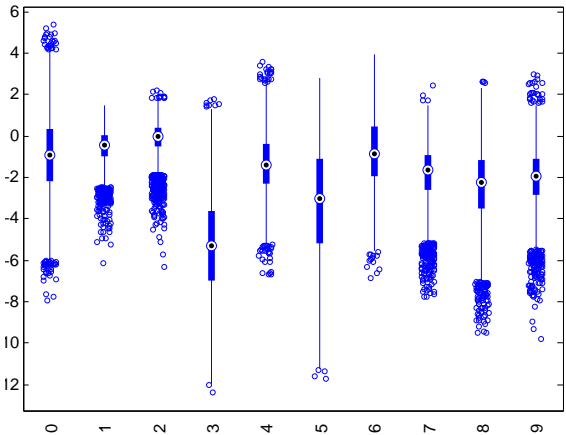}
\caption{Boxplot of the projection onto the first generalized
eigenvector for class pair $(3, 2)$ across the MNIST training set grouped by
label.  Squared projection magnitude on 2s is on average unity, whereas on 3s it
is the eigenvalue. Large responses can appear in other classes (e.g.,
5s and 8s), but this is not guaranteed by construction.}
\label{fig:mnistproj}
\end{center}
\vspace{-20pt}
\end{figure}

Figure~\ref{fig:mnistproj} shows for each of the ten classes the
distribution of values obtained by projecting the training examples
in that class onto the first eigenvector for class pair 
$(3, 2)$, i.e., the top left image in Figure~\ref{fig:mnistfeats}.
The projection pattern inspires two comments.  First, while
the magnitude of the projection is itself discriminative for
distinguishing between 2s and 3s, there is additional information in
knowing the sign of the projection.  This motivates our particular
choice of nonlinear expansion in Algorithm~\ref{alg:gem}.  Second,
the detector is discriminative for class 3 vs. class 2 as per design,
but also useful for distinguishing other classes from 2s.  However
certain classes such as 1s and 7s would be completely confused with
2s were this the only feature.  The number of classes in MNIST is
modest ($k=10$) so we can easily afford to extract features for all
$k(k-1)$ class pairs for excellent discrimination.  For problems with
a large number of classes, however, we need to carefully pick the
subproblems we need to solve so that the resulting set of features
is discriminative, diverse, and complete.  We revisit this topic
in Section~\ref{sec:discussion}.

Table~\ref{tab:mnistres} contains results for algorithm~\ref{alg:gem} 
on the MNIST test set. To determine the hyperparameter settings $\gamma$ and $\theta$, we
held out a fraction of the training set for validation. Once $\gamma$
and $\theta$ were determined, we trained on the entire training set.

For ``deep GEM'' we applied GEM to the representation created by GEM,
i.e., line 7 of Algorithm~\ref{alg:gem}.  Because of the intermediate
nonlinearity this is not equivalent to a single application of GEM,
and we do observe an improvement in generalization.  Subsequent
recursive compositions of GEM degrade generalization, e.g., 3 levels
of GEM yields 110 test errors.  We would like to better understand
the conditions under which composing GEM with itself is beneficial.

Our results occupy an intermediate position amongst state of the
art results on MNIST. For comparison we include results from other permutation-invariant methods from \cite{wan2013regularization} and \cite{goodfellow2013maxout}. These methods rely on generic non-convex 
optimization techniques and face challenging scaling issues 
in a distributed setting~\cite{NIPS2012_0598}.
While maximization of the Rayleigh quotient~\eqref{eqn:rayleigh}
is non-convex, mature implementations are computationally efficient
and numerically robust.  The final classifier is built using convex
techniques and our pipeline is particularly well suited to the
distributed setting, as discussed in Section~\ref{sec:discussion}.

\begin{table}
\begin{center}
\begin{tabular}{c|c}
Method & Test Errors \\ \hline
Dropout     & 120 \\
DropConnect & 112 \\
GEM & 108  \\
deep GEM & 96 \\
Maxout & 94 \\
\end{tabular}
\caption{Test errors on MNIST.  All techniques are permutation invariant and do not augment the training set.}
\label{tab:mnistres}
\end{center}
\vspace{-12pt}
\end{table}

\subsection{Covertype}

Covertype is a multiclass data set whose task is to
predict one of 7 forest cover types using 54 cartographic
variables~\cite{blackard1999comparative}.  RBF kernels provide
state of the art performance on Covertype, and consequently
it has been a benchmark dataset for fast approximate kernel
techniques~\cite{rahimi2007random,manik}.  Here, we demonstrate that
generalized eigenvector extraction composes well with randomized
feature maps in the primal.  This approximates generalized
eigenfunction extraction in the RKHS, while retaining the speed
and compactness of primal approaches.

Covertype does not come with a designated test set, so we randomly
permuted the data set and used the last 10\% for testing, utilizing
the same train-test split for all experiments.  We followed the
same experimental protocol as the previous section, i.e., held out a
portion of the training set for validation to select hyperparameters.

Table~\ref{tab:covertyperes} summarizes the results.\footnote{When
comparing with other published results, be aware that many
authors adjust the task to be a binary classification task.}
GEM and deep GEM are exactly the same as in the previous
section, i.e., Algorithm~\ref{alg:gem} without and with
self-composition respectively.  RFF stands for Random Fourier
Features~\cite{rahimi2007random}, in which the Gaussian kernel
is approximated in the primal by a randomized cosine map;
we used logistic regression for the primal learning algorithm.
We treated the bandwidth and number of cosines as hyperparameters
to be optimized.

The relatively poor classification performance of RFF on Covertype
has been noted before~\cite{rahimi2007random}, a result we reproduce
here.  Instead of using the randomized feature map directly, however,
we can apply Algorithm~\ref{alg:gem} to the representation induced
by RFF, which we denote GEM + RFF.  This improves the classification
error with only modest increase in computation cost, e.g., in MATLAB
it takes 8 seconds to compute the randomized Fourier features, 58
seconds to (sequentially) solve the generalized eigenvalue problems
and compute the GEM feature representation, and 372 seconds to
optimize the logistic regression. The final error rate of 8.4\%
is a new record for this task. 

\begin{table}
\begin{center}
\begin{tabular}{c|c}
Method & Test Error Rate \\ \hline
GEM & 12.9\%  \\
RFF & 12.7\% \\
deep GEM & 9.8\% \\
GEM + RFF & 8.4\% \\
RBF kernel (exact) & 8.8\% 
\end{tabular}
\caption{Test error rates on Covertype.  The RBF kernel result is from~\cite{manik} where
    they also use a 90\%-10\% (but different) train-test split.}
\label{tab:covertyperes}
\end{center}
\vspace{-24pt}
\end{table}

\subsection{TIMIT}

TIMIT is a corpus of phonemically and lexically annotated speech
of English speakers of multiple genders and dialects~\cite{timit}.
Although the ultimate problem is sequence annotation, there is
a derived multiclass classification problem of predicting the
phonemic annotation associated with a short segment of audio.  Such a
classifier can be composed with standard sequence modeling techniques
to produce an overall solution, which has made the multiclass problem
a subject of research~\cite{hinton2012improving,hutchinson2012deep}.
In this experiment we focus exclusively on the multiclass problem.

We use a standard preprocessing of TIMIT as our initial
representation~\cite{hutchinson2012deep}.  Specifically the speech
is converted into feature vectors via the first to twelfth Mel
frequency cepstral coefficients and energy plus first and second
temporal derivatives.  This results in 39 coefficients per frame,
which is concatenated with 5 preceding and 5 following frames to
produce a 429 coefficient input to the classifier.  The targets
for the classifier are the 183 phone states (i.e., 61 phones each
in 3 possible states).

We use the standard training, development, and test sets of TIMIT.
As in previous experiments herein, hyperparameters are optimized
on the development set (using cross-entropy as the objective), but
unlike previous experiments we do not retrain with the development
set once hyperparameters are determined, in correspondence with the
experimental protocol used with the T-DSN~\cite{hutchinson2012deep}.

With 183 classes the all-pairs approach for generalized eigenvector
extraction is  unwieldy, so we used a randomized procedure to select
from which class pairs to extract features, by randomly positioning
the class labels on a hypercube and extracting generalized
eigenvectors only for immediate hyperneighbors.  For $k$ classes
this results in $O (k \log k)$ generalized eigenvalue problems.
Although we did not attempt a thorough exploration of different
strategies for subproblem selection, the hypercube heuristic yielded
better results for a given feature budget than either uniform random
selection over all class pairs or stratified random selection over
class pairs ensuring equal numbers of denominator or numerator
classes.  The resulting performance for five different choices of
random hypercube is shown in the row of Table~\ref{tab:timitres}
denoted GEM. We show both multiclass error rate as well as cross
entropy, the objective we are actually optimizing.

The random subproblem selection creates an opportunity to ensemble,
and empirically the resulting classifiers are sufficiently
diverse that ensembling yields a substantial improvement.
In Table~\ref{tab:timitres}, denoted GEM~ensemble, we show the
performance of the ensemble prediction of the 5 classifiers using
the geometric mean prediction (this is the prediction that minimizes
its average KL-divergence to each element of the ensemble).
The result matches the classification error and improves upon the
cross-entropy loss of the best published T-DSN.  This is remarkable
considering the T-DSN is a deep architecture employing between
8 and 13 stacked layers of nonlinear transformations, whereas
the GEM procedure produces a shallow architecture with a single
nonlinear layer.

\begin{table}
\begin{center}
\begin{tabular}{c|c|c}
\multirow{2}{*}{Method} & Frame & Cross \\
& State Error (\%)  & Entropy \\ \hline
GEM & $41.87 \pm 0.073 $ & $1.637\pm 0.001$ \\
%\multirow{2}{*}{GEM} & 41.89, 41.83, 41.82 & $1.635,1.638,1.636$ \\
%& 41.82, 41.99 & 1.638,1.638 \\
T-DSN & 40.9 & 2.02 \\
GEM (ensemble) & 40.86 & 1.581
\end{tabular}
\caption{Results on TIMIT test set.  T-DSN is the best result from~\cite{hutchinson2012deep}.}
\label{tab:timitres}
\end{center}
\vspace{-24pt}
\end{table}

\section{Discussion}\label{sec:discussion}

Given the simplicity and empirical success of our method, we were
surprised to find considerable work on methods that only extract
the first generalized eigenvector \cite{mika2003constructing} but
very little work on using the top $m$ generalized eigenvectors.
Our experience is that additional eigenvectors provide complementary
information. Empirically, their inclusion in the final classifier
far outweighs the necessary increase in sample complexity, especially
given typical modern data set sizes.  Thus we believe this technique
should be valuable in other domains.

Of course our method will not be able to extract anything useful
if all classes have the same second moment but different higher
order statistics.  While our limited experience here suggests second
moments are informative for natural datasets, there are potential
benefits in using higher order moments. For example, we could replace
our class-conditional second moment matrix with a second moment
matrix conditioned on other events, informed by higher order moments.

As the number of class labels increases, say $k \geq 1000$, our
brute force all-pairs approach, which scales as $O(k^2)$, becomes
increasingly difficult both computationally and statistically: we
need to solve $O(k^2)$ eigenvector problems (possibly in parallel)
and deal with $O(k^2)$ features in the ultimate classifier.
Taking a step back, the object of our attention is the tensor $\E[x
\otimes x \otimes y]$ and in this paper we only studied one way
of selecting pairs of slices from it. In particular, our slices
are tensor contractions with one of the standard basis vectors in
$\R^k$. Clearly, contracting the tensor with any vector $u$ in $\R^k$
is possible. This contraction leads to a $d\times d$ second moment
matrix which averages the examples of the different classes in the
way prescribed by $u$. Any sensible, data-dependent way of picking
a good set of vectors $u$ should be able to reduce the dependence
on $k^2$.

The same issues also arise with a continuous $y$: how to define and
estimate the pairs of matrices whose generalized eigenvectors should
be extracted is not immediately clear. Still, the case where $y$ is
multidimensional (vector regression) can be reduced to the case of
univariate $y$ using the same technique of contraction with a vector
$u$. Feature extraction from a continuous $y$ can be done by 
discretization (solely for the purpose of feature extraction), 
which is much easier in the univariate case than in the multivariate
case.

In domains where examples exhibit large variation, or when labeled
data is scarce, incorporating prior knowledge is extremely important.
For example, in image recognition, convolutions and local pooling
are popular ways to generate representations that are invariant to
localized distortions. Directly exploiting the spatial or temporal
structure of the input signal, as well as incorporating other kinds
of invariances in our framework, is a direction for future work.

High dimensional problems create both computational and
statistical challenges.  Computationally, when $d > 10^6$, 
the solution of generalized eigenvalue problems can only
be performed via specialized libraries such as ScaLAPACK,
or via randomized techniques, such as those outlined 
in~\cite{halko2011finding,saibaba2013randomized}.  
Statistically, the finite-sample second
moment estimates can be inaccurate when the number of dimensions
overwhelms the number of examples. The effect of this 
inaccuracy on the extracted eigenvectors needs 
further investigation. In particular, it might be
unimportant for datasets encountered in practice, e.g., 
if the true class-conditional second moment matrices have low effective rank~\cite{bunea2012}.

Finally, our approach is simple to implement and well suited to
the distributed setting. Although a distributed implementation
is out of the scope of this paper, we do note that aspects
of Algorithm~\ref{alg:gem} were motivated by the desire for
efficient distributed implementation.  The recent success of
non-convex learning systems has sparked renewed interest in
non-convex representation learning. However, generic distributed
non-convex optimization is extremely challenging.  Our approach
first decomposes the problem into tractable non-convex subproblems
and then subsequently composes with convex techniques.  Ultimately we
hope that judicious application of convenient non-convex objectives,
coupled with convex optimization techniques, will yield competitive
and scalable learning algorithms.

% deep gem?
 
%how to use prior knowledge (convolutions etc)?  how to handle real values for y?
%how to handle large |y|?  what if examples inside a class do not generally
%belong to a single cluster? 

%$p \ll n$, $p < 10^5$, $5 < k < 100$: what happens when we relax these conditions?

%computational complexity, statistical complexity

\section{Conclusion}

We have shown a method for creating discriminative features
via solving generalized eigenvalue problems, and demonstrated
empirical efficacy via multiple experiments.  The method has
multiple computational and statistical desiderata.  Computationally,
generalized eigenvalue extraction is a mature numerical primitive,
and the matrices which are decomposed can be estimated using
map-reduce techniques.  Statistically, the method is invariant to
invertible linear transformations, estimation of the eigenvectors is
robust when the number of examples exceeds the number of variables,
and estimation of the resulting classifier parameters is eased due
to the parsimony of the derived representation.

Due to this combination of empirical, computational, and statistical
properties, we believe the method introduced herein has utility
for a wide variety of machine learning problems.

%Acknowledgements should only appear in the accepted version. 
\section*{Acknowledgments} 
We thank John Platt and Li Deng for helpful discussions and assistance with the TIMIT experiments.
%John Platt, Li Deng
%\textbf{Do not} include acknowledgements in the initial version of
%the paper submitted for blind review.

% In the unusual situation where you want a paper to appear in the
% references without citing it in the main text, use \nocite
%\nocite{langley00}

\bibliography{gev}

\begin{thebibliography}{31}
\providecommand{\natexlab}[1]{#1}
\providecommand{\url}[1]{\texttt{#1}}
\expandafter\ifx\csname urlstyle\endcsname\relax
  \providecommand{\doi}[1]{doi: #1}\else
  \providecommand{\doi}{doi: \begingroup \urlstyle{rm}\Url}\fi

\bibitem[Agarwal et~al.(2011)Agarwal, Chapelle, Dud\'{\i}k, and
  Langford]{agarwal2011}
Agarwal, Alekh, Chapelle, Olivier, Dud\'{\i}k, Miroslav, and Langford, John.
\newblock A reliable effective terascale linear learning system.
\newblock \emph{CoRR}, abs/1110.4198, 2011.

\bibitem[Blackard \& Dean(1999)Blackard and Dean]{blackard1999comparative}
Blackard, Jock~A and Dean, Denis~J.
\newblock Comparative accuracies of artificial neural networks and discriminant
  analysis in predicting forest cover types from cartographic variables.
\newblock \emph{Computers and Electronics in Agriculture}, 24\penalty0
  (3):\penalty0 131--151, 1999.

\bibitem[{Bunea} \& {Xiao}(2012){Bunea} and {Xiao}]{bunea2012}
{Bunea}, F. and {Xiao}, L.
\newblock {On the sample covariance matrix estimator of reduced effective rank
  population matrices, with applications to fPCA}.
\newblock \emph{ArXiv e-prints}, December 2012.

\bibitem[Dean et~al.(2012)Dean, Corrado, Monga, Chen, Devin, Le, Mao, Ranzato,
  Senior, Tucker, Yang, and Ng]{NIPS2012_0598}
Dean, Jeffrey, Corrado, Greg, Monga, Rajat, Chen, Kai, Devin, Matthieu, Le,
  Quoc~V., Mao, Mark~Z., Ranzato, Marc'Aurelio, Senior, Andrew~W., Tucker,
  Paul~A., Yang, Ke, and Ng, Andrew~Y.
\newblock Large scale distributed deep networks.
\newblock In \emph{NIPS}, pp.\  1232--1240, 2012.

\bibitem[Demmel et~al.(2000)Demmel, Dongarra, Ruhe, van~der Vorst, and
  Bai]{demmel2000templates}
Demmel, James, Dongarra, Jack, Ruhe, Axel, van~der Vorst, Henk, and Bai,
  Zhaojun.
\newblock \emph{Templates for the solution of algebraic eigenvalue problems: a
  practical guide}.
\newblock Society for Industrial and Applied Mathematics, 2000.

\bibitem[Diamantaras \& Kung(1996)Diamantaras and
  Kung]{diamantaras1996principal}
Diamantaras, Konstantinos~I and Kung, Sun~Y.
\newblock \emph{Principal component neural networks}.
\newblock Wiley New York, 1996.

\bibitem[Fisher(1936)]{fisher1936use}
Fisher, Ronald~A.
\newblock The use of multiple measurements in taxonomic problems.
\newblock \emph{Annals of eugenics}, 7\penalty0 (2):\penalty0 179--188, 1936.

\bibitem[Fisher et~al.(1986)Fisher, Doddington, and Marshall]{timit}
Fisher, W., Doddington, G., and Marshall, Goudie~K.
\newblock The {DARPA} speech recognition research database: {S}pecification and
  status.
\newblock In \emph{Proceedings of the DARPA Speech Recognition Workshop}, pp.\
  93--100, 1986.

\bibitem[Friedman(1989)]{friedman1989regularized}
Friedman, Jerome~H.
\newblock Regularized discriminant analysis.
\newblock \emph{Journal of the American statistical association}, 84\penalty0
  (405):\penalty0 165--175, 1989.

\bibitem[Golub \& Van~Loan(2012)Golub and Van~Loan]{golub2012matrix}
Golub, Gene~H and Van~Loan, Charles~F.
\newblock \emph{Matrix computations}, volume~3.
\newblock JHU Press, 2012.

\bibitem[Goodfellow et~al.(2013)Goodfellow, Warde-Farley, Mirza, Courville, and
  Bengio]{goodfellow2013maxout}
Goodfellow, Ian~J, Warde-Farley, David, Mirza, Mehdi, Courville, Aaron, and
  Bengio, Yoshua.
\newblock Maxout networks.
\newblock \emph{arXiv preprint arXiv:1302.4389}, 2013.

\bibitem[Halevy et~al.(2009)Halevy, Norvig, and
  Pereira]{halevy2009unreasonable}
Halevy, Alon, Norvig, Peter, and Pereira, Fernando.
\newblock The unreasonable effectiveness of data.
\newblock \emph{Intelligent Systems, IEEE}, 24\penalty0 (2):\penalty0 8--12,
  2009.

\bibitem[Halko et~al.(2011)Halko, Martinsson, and Tropp]{halko2011finding}
Halko, Nathan, Martinsson, Per-Gunnar, and Tropp, Joel~A.
\newblock Finding structure with randomness: Probabilistic algorithms for
  constructing approximate matrix decompositions.
\newblock \emph{SIAM review}, 53\penalty0 (2):\penalty0 217--288, 2011.

\bibitem[Hinton et~al.(2012{\natexlab{a}})Hinton, Deng, Yu, Dahl, Mohamed,
  Jaitly, Senior, Vanhoucke, Nguyen, Sainath, et~al.]{hinton2012deep}
Hinton, Geoffrey, Deng, Li, Yu, Dong, Dahl, George~E, Mohamed, Abdel-rahman,
  Jaitly, Navdeep, Senior, Andrew, Vanhoucke, Vincent, Nguyen, Patrick,
  Sainath, Tara~N, et~al.
\newblock Deep neural networks for acoustic modeling in speech recognition: The
  shared views of four research groups.
\newblock \emph{Signal Processing Magazine, IEEE}, 29\penalty0 (6):\penalty0
  82--97, 2012{\natexlab{a}}.

\bibitem[Hinton et~al.(2012{\natexlab{b}})Hinton, Srivastava, Krizhevsky,
  Sutskever, and Salakhutdinov]{hinton2012improving}
Hinton, Geoffrey~E, Srivastava, Nitish, Krizhevsky, Alex, Sutskever, Ilya, and
  Salakhutdinov, Ruslan~R.
\newblock Improving neural networks by preventing co-adaptation of feature
  detectors.
\newblock \emph{arXiv preprint arXiv:1207.0580}, 2012{\natexlab{b}}.

\bibitem[Hutchinson et~al.(2012)Hutchinson, Deng, and Yu]{hutchinson2012deep}
Hutchinson, Brian, Deng, Li, and Yu, Dong.
\newblock A deep architecture with bilinear modeling of hidden representations:
  Applications to phonetic recognition.
\newblock In \emph{Acoustics, Speech and Signal Processing (ICASSP), 2012 IEEE
  International Conference on}, pp.\  4805--4808. IEEE, 2012.

\bibitem[Jose et~al.(2013)Jose, Goyal, Aggrwal, and Varma]{manik}
Jose, Cijo, Goyal, Prasoon, Aggrwal, Parv, and Varma, Manik.
\newblock Local deep kernel learning for efficient non-linear svm prediction.
\newblock In \emph{Proceedings of the 30th International Conference on Machine
  Learning (ICML-13)}, pp.\  486--494, 2013.

\bibitem[Koren et~al.(2009)Koren, Bell, and Volinsky]{koren2009matrix}
Koren, Yehuda, Bell, Robert, and Volinsky, Chris.
\newblock Matrix factorization techniques for recommender systems.
\newblock \emph{Computer}, 42\penalty0 (8):\penalty0 30--37, 2009.

\bibitem[Krizhevsky et~al.(2012)Krizhevsky, Sutskever, and
  Hinton]{krizhevsky2012imagenet}
Krizhevsky, Alex, Sutskever, Ilya, and Hinton, Geoff.
\newblock Imagenet classification with deep convolutional neural networks.
\newblock In \emph{Advances in Neural Information Processing Systems 25}, pp.\
  1106--1114, 2012.

\bibitem[LeCun et~al.(1998)LeCun, Bottou, Bengio, and Haffner]{mnistlecun}
LeCun, Yann, Bottou, L{\'e}on, Bengio, Yoshua, and Haffner, Patrick.
\newblock Gradient-based learning applied to document recognition.
\newblock \emph{Proceedings of the IEEE}, 86\penalty0 (11):\penalty0
  2278--2324, 1998.

\bibitem[Li(1991)]{li1991sliced}
Li, Ker-Chau.
\newblock Sliced inverse regression for dimension reduction.
\newblock \emph{Journal of the American Statistical Association}, 86\penalty0
  (414):\penalty0 316--327, 1991.

\bibitem[Livni et~al.(2013)Livni, Lehavi, Schein, Nachliely, Shalev-Shwartz,
  and Globerson]{livni2013vanishing}
Livni, Roi, Lehavi, David, Schein, Sagi, Nachliely, Hila, Shalev-Shwartz, Shai,
  and Globerson, Amir.
\newblock Vanishing component analysis.
\newblock In \emph{Proceedings of the 30th International Conference on Machine
  Learning (ICML-13)}, pp.\  597--605, 2013.

\bibitem[Mairal et~al.(2008)Mairal, Bach, Ponce, Sapiro, and
  Zisserman]{mairal2008supervised}
Mairal, Julien, Bach, Francis, Ponce, Jean, Sapiro, Guillermo, and Zisserman,
  Andrew.
\newblock Supervised dictionary learning.
\newblock \emph{arXiv preprint arXiv:0809.3083}, 2008.

\bibitem[Mika et~al.(2003)Mika, Ratsch, Weston, Scholkopf, Smola, and
  Muller]{mika2003constructing}
Mika, Sebastian, Ratsch, Gunnar, Weston, Jason, Scholkopf, B, Smola, Alex, and
  Muller, K-R.
\newblock Constructing descriptive and discriminative nonlinear features:
  Rayleigh coefficients in kernel feature spaces.
\newblock \emph{Pattern Analysis and Machine Intelligence, IEEE Transactions
  on}, 25\penalty0 (5):\penalty0 623--628, 2003.

\bibitem[Platt et~al.(2010)Platt, Toutanova, and Yih]{platt2010translingual}
Platt, John~C, Toutanova, Kristina, and Yih, Wen-tau.
\newblock Translingual document representations from discriminative
  projections.
\newblock In \emph{Proceedings of the 2010 Conference on Empirical Methods in
  Natural Language Processing}, pp.\  251--261. Association for Computational
  Linguistics, 2010.

\bibitem[Rahimi \& Recht(2007)Rahimi and Recht]{rahimi2007random}
Rahimi, Ali and Recht, Benjamin.
\newblock Random features for large-scale kernel machines.
\newblock In \emph{Advances in neural information processing systems}, pp.\
  1177--1184, 2007.

\bibitem[Saibaba \& Kitanidis(2013)Saibaba and
  Kitanidis]{saibaba2013randomized}
Saibaba, Arvind~K and Kitanidis, Peter~K.
\newblock Randomized square-root free algorithms for generalized hermitian
  eigenvalue problems.
\newblock \emph{arXiv preprint arXiv:1307.6885}, 2013.

\bibitem[Vapnik(1998)]{vapnik1998statistical}
Vapnik, Vladimir~N.
\newblock \emph{Statistical learning theory}.
\newblock Wiley, 1998.

\bibitem[Vershynin(2010)]{vershynin2010introduction}
Vershynin, Roman.
\newblock Introduction to the non-asymptotic analysis of random matrices.
\newblock \emph{arXiv preprint arXiv:1011.3027}, 2010.

\bibitem[Wan et~al.(2013)Wan, Zeiler, Zhang, Cun, and
  Fergus]{wan2013regularization}
Wan, Li, Zeiler, Matthew, Zhang, Sixin, Cun, Yann~L, and Fergus, Rob.
\newblock Regularization of neural networks using dropconnect.
\newblock In \emph{Proceedings of the 30th International Conference on Machine
  Learning (ICML-13)}, pp.\  1058--1066, 2013.

\bibitem[Wold \& Sjostrom(1977)Wold and Sjostrom]{wold1977simca}
Wold, Svante and Sjostrom, Michael.
\newblock Simca: a method for analyzing chemical data in terms of similarity
  and analogy.
\newblock \emph{Chemometrics: theory and application}, 52:\penalty0 243--282,
  1977.

\end{thebibliography}
\bibliographystyle{icml2014}

\end{document}